\newcommand{\visname}{{\sc MDPv\/is}}
\newcommand{\algname}{MFMCi}
\DeclareMathOperator*{\argmin}{\arg\!\min}
\newtheorem{theorem}{Theorem}
\newtheorem{corollary}{Corollary}
\theoremstyle{definition}
\newtheorem{definition}{Definition}[section]
\icmltitlerunning{Factoring Exogenous State for MFMC}
\begin{document} 

\twocolumn[
\icmltitle{Factoring Exogenous State for Model-Free Monte Carlo}

\icmlauthor{Sean McGregor}{arXiv@seanbmcgregor.com}
\icmladdress{School of Electrical Engineering and Computer Science,
            Oregon State University}
\icmlauthor{Rachel Houtman}{rachel.houtman@oregonstate.edu}
\icmladdress{College of Forestry,
            Oregon State University}
\icmlauthor{Claire Montgomery}{claire.montgomery@oregonstate.edu}
\icmladdress{College of Forestry,
            Oregon State University}
\icmlauthor{Ronald Metoyer}{rmetoyer@nd.edu}
\icmladdress{College of Engineering,
            University of Notre Dame}
\icmlauthor{Thomas G. Dietterich}{tgd@oregonstate.edu}
\icmladdress{School of Electrical Engineering and Computer Science,
            Oregon State University}

\icmlkeywords{Markov Decision Processes, Monte Carlo Methods, Surrogate Modeling, Visualization}

\vskip 0.3in
]


\begin{abstract}
  Policy analysts wish to visualize a range of policies for large
  simulator-defined Markov Decision Processes (MDPs). One visualization approach is to invoke the
  simulator to generate on-policy trajectories and then visualize
  those trajectories. When the simulator is expensive, this is not
  practical, and some method is required for generating trajectories
  for new policies without invoking the simulator. The method of
  Model-Free Monte Carlo (MFMC) can do this by stitching together
  state transitions for a new policy based on previously-sampled
  trajectories from other policies. This ``off-policy Monte Carlo
  simulation'' method works well when the state space has low
  dimension but fails as the dimension grows. This paper describes a
  method for factoring out some of the state and action variables so
  that MFMC can work in high-dimensional MDPs. The new
  method, \algname, is evaluated on a very challenging wildfire
  management MDP.
\end{abstract}

\section{Introduction}


\renewcommand{\dblfloatpagefraction}{0.9}
\renewcommand{\floatpagefraction}{0.9}

As reinforcement learning systems are increasingly deployed
in the real-world, methods for justifying their ecological
validity becomes increasingly important.
For example, consider the problem of wildfire
management in which land managers must decide when and where to fight
fires on public lands. Our goal is to create an interactive
visualization environment in which policy analysts can define various
fire management polices and evaluate them through comparative
visualizations. The transition dynamics of our fire management MDP are
defined by a simulator that takes as input a detailed map of the
landscape, an ignition location, a stream of weather conditions, and a
fire fighting decision (i.e., suppress the fire vs. allow it to burn),
and produces as output the resulting landscape map and associated
variables (fire duration, area burned, timber value lost, fire
fighting cost, etc.).  The simulator also models the year-to-year
growth of the trees and accumulation of fuels.  Unfortunately, this
simulator is extremely expensive. It can take up to 7 hours to
simulate a single 100-year trajectory of fire ignitions and resulting
landscapes. How can we support interactive policy analysis when the
simulator is so expensive?

Our approach is to develop a surrogate model that can substitute for
the simulator. We start by designing a small set of ``seed policies''
and invoking the slow simulator to generate several 100-year
trajectories for each policy. This gives us a database of state
transitions of the form $(s_t,a_t,r_t,s_{t+1})$, where $s_t$ is the
state at time $t$, $a_t$ is the selected action, $r_t$ is the
resulting reward, and $s_{t+1}$ is the resulting state. Given a new
policy $\pi$ to visualize, we apply the method of Model-Free Monte
Carlo (MFMC) developed by \citet{Fonteneau2013}
to simulate trajectories for $\pi$ by stitching together state
transitions according to a given distance metric $\Delta$. Given a
current state $s$ and desired action $a=\pi(s)$, MFMC searches the
database to find a tuple $(\tilde{s}, \tilde{a}, r,s')$ that minimizes
the distance $\Delta((s,a), (\tilde{s}, \tilde{a}))$.  It then uses
$s'$ as the resulting state and $r$ as the corresponding one-step
reward. We call this operation ``stitching'' $(s,a)$ to $(\tilde{s},\tilde{a})$. MFMC is guaranteed to give reasonable simulated trajectories under assumptions about the smoothness of the transition dynamics and reward function and provided that each matched tuple is removed from the database when it is used. Algorithm \ref{alg:mfmc} provides the pseudocode for MFMC generating a single trajectory. 

\citet{Fonteneau2010c} apply MFMC to estimate the expected cumulative return of a new policy $\pi$ by calling MFMC $n$ times and computing the average cumulative reward of the resulting trajectories. We will refer to this as the MFMC estimate $V_{MFMC}^\pi(s_0)$ of $V^\pi(s_0)$. 

\begin{algorithm2e}
\textbf{Input Parameters:} Policy $\pi$, horizon $h$, starting state $s_0$, distance metric $\Delta(.,.)$, database $D$\;
\textbf{Returns: $(s,a,r)_1,...,(s',a',r')_h$}\;
$t \gets \emptyset$\;
$s\gets s_0$\;
\While{length($t$)$<h$}{
  $a\gets \pi(s)$\;
  $\mathcal{H} \gets \underset{(\overset{\sim}{s}, \overset{\sim}{a}, r, s') \in D }{\argmin}\Delta ((s,a), (\overset{\sim}{s},\overset{\sim}{a}))$\;
  $r\gets \mathcal{H}^r$\;
  append$(t,(s, a, r))$\;
  $s\gets \mathcal{H}^{s'}$\;
  $D \gets D\setminus \mathcal{H}$\;
}
return($t$)\;
\caption{
  MFMC for a single trajectory.
  When generating multiple trajectories for a single
  trajectory set, you cannot re-use state transitions from $D$
  \cite{Fonteneau2010c}.}
\label{alg:mfmc}
\end{algorithm2e}

In high-dimensional spaces (i.e., where the states and actions are
described by many features), MFMC breaks because of two related
problems. First, distances become less informative in high-dimensional
spaces. Second, the required number of seed-policy trajectories grows
exponentially in the dimensionality of the space.  The main technical
contribution of this paper is to introduce a modified
algorithm, \algname, that reduces the dimensionality of the distance
matching process by factoring out certain exogenous state variables
and removing the features describing the action.  In many
applications, this can very substantially reduce the dimensionality of
the matching process to the point that MFMC is again practical.

This paper is organized as follows.  First, we briefly review previous
research in surrogate modeling.  Second, we introduce our method for
factoring out exogenous variables. The method requires a modification
to the way that trajectories are generated from the seed
policies. With this modification, we prove that \algname\ gives sound
results and that it has lower bias and variance than MFMC.  Third, we
conduct an experimental evaluation of \algname\ on our fire management
problem. We show that \algname\ gives good performance for three
different classes of policies and that for a fixed database size, it
gives much more accurate visualizations.

\section{Related Work} \label{sec:related}

Surrogate modeling is the construction of a fast simulator that can substitute for a slow simulator.  When designing a surrogate model for our wildfire suppression problem, we can consider several possible approaches.

First, we could write our own simulator for fire spread, timber harvest, weather, and vegetative growth that computes the state transitions more efficiently.  For instance, Arca et al.~\yrcite{Arca2013} use a custom-built model running on GPUs to calculate fire risk maps and mitigation strategies.  However, developing a new simulator requires additional work to design, implement, and (especially) validate the simulator.  This cost can easily overwhelm the resulting time savings.

A second approach would be to learn a parametric surrogate model from data generated by the slow simulator.  For instance, Abbeel et al.~\yrcite{Abbeel2005} learn helicopter dynamics by updating the parameters of a function designed specifically for helicopter flight.  Designing a suitable parametric model that can capture weather, vegetation, fire spread, and the effect of fire suppression would require a major modeling effort.

Instead of pursuing these two approaches, we adopted the method of Model-Free Monte Carlo (MFMC). In MFMC, the model is replaced by a database of transitions computed from the slow simulator.  MFMC is ``model-free'' in the sense that it does not learn an explicit model of the transition probabilities. In effect, the database constitutes the transition model (c.f., Dyna; \cite{s-ialprbadp-90}).

\section{Notation} \label{sec:proof}

We work with the standard finite horizon undiscounted MDP \cite{Bellman1957,Puterman1994}, denoted by the tuple ${\cal M}=\langle{S,A,P,R,s_0,h\rangle}$. $S$ is a finite set of states of the world; $A$ is a finite set of possible actions that can be taken in each state; $P: S\times A\times S \mapsto [0,1]$ is the conditional probability of entering state $s'$ when action $a$ is executed in state $s$; $R(s,a)$ is the finite reward received after performing action $a$ in state $s$; $s_0$ is the starting state; and $\pi: S \mapsto A$ is the policy function that selects which action $a\in A$ to execute in state $s\in S$. We additionally define $D$ as the state transition database.  

In this paper, we focus on two queries about a given MDP. First, given a policy $\pi$, we wish to estimate the expected cumulative reward of executing that policy starting in state $s_0$: $V^\pi(s_0) = \mathbb{E}[\sum_{t=0}^{h} R(s_t,\pi(s_t)) | s_0, \pi]$. Second, we are interested in visualizing the distribution of the states visited at time $t$: $P(s_t|s_0, \pi)$. In particular, let $v^1,\ldots,v^m$ be functions that compute interesting properties of a state. For example, in our fire domain, $v^1(s)$ might compute the total area of old growth Douglas fir and $v^2(s)$ might compute the total volume of harvestable wood. Visualizing the distribution of these properties over time gives policy makers insight into how the system will evolve when it is controlled by policy $\pi$. 

\section{Factoring State to Improve MFMC}

\begin{figure}
    \centering
    \begin{subfigure}[t]{\columnwidth}
      \centering
        \includegraphics[width=0.60\columnwidth]{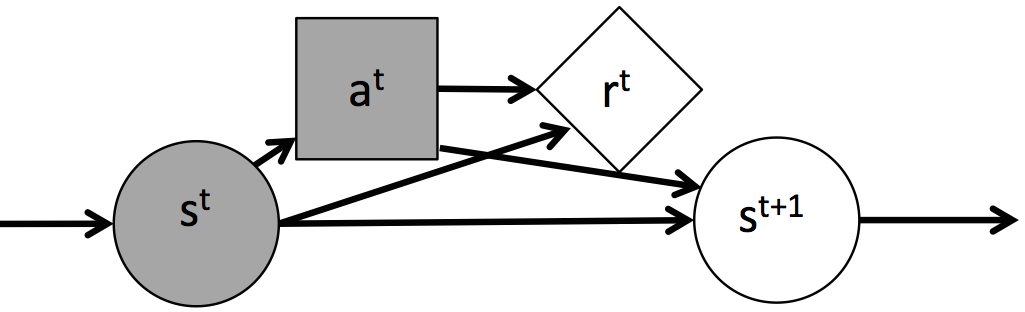}
        \caption{The standard MDP transition.}
        \label{fig:MDP1}
    \end{subfigure}
    ~~~
    \begin{subfigure}[t]{\columnwidth}
      \centering
        \includegraphics[width=0.60\columnwidth]{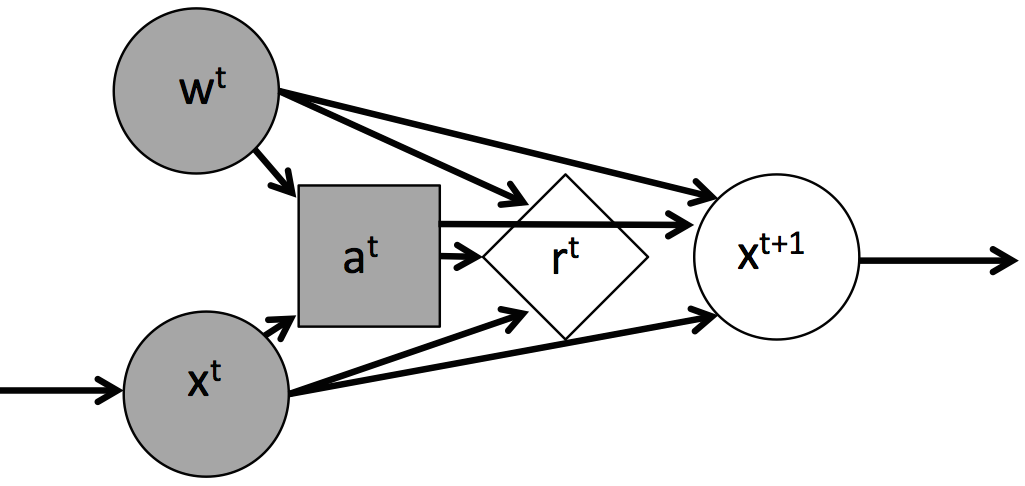}
        \caption{
        MDP transition with \emph{exogenous} ($w$) and \emph{Markovian}
        variables ($x$).}
        \label{fig:MDP3}
    \end{subfigure}
    \caption{MDP probabilistic graphical models.}
    \label{fig:PGMMDP}
\end{figure}

We now describe how we can factor the state variables of an MDP in order to reduce the dimensionality of the MFMC stitching computation.  State variables can be divided into Markovian and Time-Independent random variables.  A time-independent random variable $x_{t}$ is exchangeable over time $t$ and does not depend on any other random variable (including its own previous values). A (first-order) Markovian random variable $x_{t+1}$ depends on its value $x_t$ at the previous time step. In particular, the state variable $s_{t+1}$ depends on $s_t$ and the chosen action $a_t$. Variables can also be classified as endogenous and exogenous.
The variable $x_t$ is exogenous if its distribution is independent of $a_{t'}$ and $s_{t'} \setminus \{x_t'\}$ for all $t'\leq t$.
Non-exogenous variables are endogenous.
The key insight of this paper is that if a variable is time-independent and exogenous, then it can be removed from the MFMC stitching calculation as follows.

Let us factor the MDP state $s$ into two vectors of random variables: $w$, which contains the time-independent, exogenous state variables and $x$, which contains all of the other state variables (see Figure \ref{fig:PGMMDP}).  In our wildfire suppression domain, the state of the trees from one time step to another is Markovian, but our policy decisions also depend on exogenous weather events such as rain, wind, and lightning.

We can formalize this factorization as follows.

\theoremstyle{definition}
\begin{definition}{A Factored Exogenous MDP}
is an MDP such that the state $(x,w)$ and next state $(x',w')$ are
related according to

\begin{equation} \label{eq:fmdp__exo_eqn}
 Pr(x',w'|x,w,a)= Pr(w')Pr(x'|x,w,a).
\end{equation}
\end{definition}

This factorization allows us to avoid computing similarity in the complete state space $s$. Instead we only need to compute the similarity of the Markov state $x$.  Without the factorization, MFMC stitches $(s,a)$ to the $(\tilde{s},\tilde{a})$ in the database $D$ that minimizes a distance metric $\Delta$, where $\Delta$ has the form $\Delta((s,a),(\tilde{s},\tilde{a}))\mapsto\mathbb{R}^{+}$.  Our new algorithm, \algname{}, makes its stitching decisions using only the Markov state. It stitches the current state $x$ by finding the tuple $(\tilde{x}, \tilde{w}, a, r, x')$ that minimizes the lower-dimensional distance metric $\Delta_i(x,\tilde{x})$.  MFMCi then adopts $(\tilde{x},\tilde{w})$ as the current state, computes the policy action $\tilde{a} = \pi(\tilde{x},\tilde{w})$, and then makes a transition to $x'$ with reward $r$.  The rationale for replacing $x$ by $\tilde{x}$ is the same as in MFMC, namely that it is the nearest state from the database $D$. The rationale for replacing $w$ by $\tilde{w}$ is that both $w$ and $\tilde{w}$ are exchangeable draws from the exogenous time-independent distribution $P(w)$, so they can be swapped without changing the distribution of simulated paths.

\begin{algorithm2e}[t]
\SetKwProg{Fn}{Function}{}{}
\DontPrintSemicolon
\textbf{Input Parameters:} Policy $\pi$, horizon $h$, trajectory count $n$, transition simulator $f_x$, reward simulator $f_r$, exogenous distribution $P(w)$, stochasticity distribution $P(z)$\;
\textbf{Returns:} $nh$ transition sets $B$\;
$D \gets \emptyset$\;
\While{$|D|<nh$}{
  $x=f_{x}(\cdot,\cdot,\cdot,\cdot)$ // Draw initial Markov state\;
  $l=0$\;
  \While{$l<h$}{
    $B \gets \emptyset$\;
    $w\sim P(w)$\;
    $z\sim P(z)$\;
    \For{$a\in A$} {
      $r \gets f_r(x,a,w,z)$\;
      $x' \gets f_{x}(x,a,w,z)$\;
      $B \gets B \cup \{(x,w,a,r,x')\}$\;
    }
    append($D$,$B$)\;
    $x \gets B_{\pi(x,w)}^{x'}$\;
    $l \gets l+1$\;
  }
}
return($D$)\;
\caption{
Populating $D$ for \algname\ by sampling whole trajectories.
}
\label{alg:sampling-procedure}
\end{algorithm2e}

There is one subtlety that can introduce bias into
the simulated trajectories. What happens when the action
$\tilde{a}=\pi(\tilde{x},\tilde{w})$ is not equal to the
action $a$ in the database tuple $(\tilde{x}, \tilde{w}, a, r, x',w')$?
One approach would be to require that $a = \tilde{a}$ and keep rejecting candidate tuples until we find one that satisfies this constraint. We call this method, ``Biased MFMCi'', because doing this introduces a bias. Consider again the graphical model in Figure~\ref{fig:PGMMDP}. When we use the value of $a$ to decide whether to accept $\tilde{w}$, this couples $\tilde{w}$ and $\tilde{x}$ so that they are no longer independent. 

An alternative to Biased MFMCi is to change how we generate the database $D$ to ensure that for every state $(\tilde{x},\tilde{w})$, there is always a tuple $(\tilde{x}, \tilde{w}, a, r, x', w')$ for every possible action $a$. To do this, as we execute a trajectory following policy $\pi$, we simulate the result state $(x',w')$ and reward $r$ for each possible action $a$ and not just the action $a=\pi(x,w)$ dictated by the policy. We call this method ``Debiased MFMCi''. This requires drawing more samples during database construction, but it restores the independence of $\tilde{w}$ from $\tilde{x}$.

\subsection{MFMC with independencies (\algname)} \label{sec:MFMC}

For purposes of analyzing \algname, it is helpful to make the stochasticity of $P(s'|s,a)$ explicit. To do this, let $z$ be a time-independent random variable distributed according to $P(z)$. Then we can ``implement'' the stochastic transition $P(s'|s,a)$ in terms of a random draw of $z$ and a state transition {\it function} $f_x$ as follows. To make a state transition from state $s=(w,x)$ and action $a$, we draw samples of both the exogenous variable $w' \sim P(w')$ and from $z \sim P(z)$ and then evaluate the function $x' = f_x(x,w,a,z)$. The result state $s'$ is then $(x',w')$. Similarly, to model stochastic rewards,
we can define the function $f_r$ such that $r:= f_r(x,w,a,z)$.
This encapsulates all of the randomness in $P(s'|s,a)$ and $R(s,a)$ in the variables $w'$ and $z$. 

As noted in the previous section, stitching only on $x$ can
introduce bias unless we simulate the effect of every action $a$ for every $x\in D$. It is convenient to collect together all of these simulated successor states and rewards into {\it transition sets}. Let $B(x,w)$ denote the transition set of tuples $\{(x, w, a, x',r)\}$ generated by simulating each action $a$ in state $(x,w)$. Given a transition set $B$, it is useful to define selector notation as follows. Subscripts of $B$ constrain the set of matching tuples and superscripts indicate which variable is extracted from the matching tuples. Hence, $B_{a}^{x'}$ denotes the result state $x'$ for the tuple in $B$ that matches action $a$. With this notation, Algorithm~\ref{alg:sampling-procedure} describes the process of populating the database with transition sets.

\begin{algorithm2e}
\SetKwProg{Fn}{Function}{}{}
\DontPrintSemicolon
\textbf{Input Parameters:} Policy $\pi$, horizon $h$, starting state $x_0$, distance metric $\Delta_{i}(\cdot,\cdot)$, database $D$\;
\textbf{Returns:} $(x_0,w,a,r)_1,\ldots,(x',w',a',r')_h$\;
$t \gets \emptyset$\;
$x \gets x_0$\;
\While{length($t$)$<h$}{
  $\hat{B} \gets \argmin_{B \in D }\Delta_{i} (x, B^{x'})$\;
  $\hat{w} \gets \hat{B}^{w}$\;
  $a \gets \pi(x,\hat{w})$\;
  $r \gets \hat{B}^{r}_{a}$\;
  append$(t,(x,w,a,r))$\;
  $D \gets D\setminus \hat{B}$\;
  $x \gets B^{x'}_{a}$\;
}
return($t$)\;
\caption{\algname}
\label{alg:mfmci}
\end{algorithm2e}

Algorithm \ref{alg:mfmci} gives the pseudo-code for \algname.  Note
that when generating multiple trajectories with Algorithm
\ref{alg:mfmci} for a single policy query, the transition sets are drawn without replacement between trajectories.  To estimate the cumulative return of policy $\pi$, we call \algname\ $n$ times and compute the mean of the cumulative rewards of the resulting trajectories.  We refer to this as the MFMCi estimate $V_{MFMCi}^\pi(s_0)$ of $V^\pi(s_0)$.  

\subsection{Bias and Variance Bound on $V_{MFMCi}^\pi(s_0)$} \label{sec:mfmci}

Fonteneau et al.~\citeyearpar{Fonteneau2013,Fonteneau2014,Fonteneau2010c} derived bias and variance bounds on the MFMC value estimate $V_{MFMC}^\pi(s_0)$. Here we rework this derivation to provide analogous bounds for $V_{MFMCi}^\pi(s_0)$.  The Fonteneau et al.~bounds depend on assuming Lipschitz smoothness of the functions $f_s$, $f_r$ and the policy $\pi$. To do this, they require that the action space be continuous in a metric space $A$.  We will impose the same requirement for purposes of analysis.  Let $L_f$, $L_R$, and $L_{\pi}$ be Lipschitz constants
for the chosen norms $\|\cdot\|_S$ and $\|\cdot\|_A$ over the
$S$ and $A$ spaces, as follows:

\begin{equation} \label{eq:lsp}
 \|f_s(s,a,z)-f_s(s',a',z)\|_S~\leq~L_f(\|s-s'\|_S+\|a-a'\|_A) \\
\end{equation}

\begin{equation} \label{eq:lsr}
 |f_r(s,a,z)-f_r(s',a',z)|~\leq~L_R(\|s-s'\|_S+\|a-a'\|_A) \\
\end{equation}

\begin{equation} \label{eq:lspie}
 \|\pi(s)-\pi(s')\|_A\leq L_{\pi}(\|s-s'\|_S).\\
\end{equation}

To characterize the database's coverage of the state-action space, let $\alpha_{k}(D)$ be the maximum distance from any state-action pair $(s,a)$ to its $k$-th nearest neighbor in database $D$. Fonteneau, et al. call this the $k$-dispersion of the database.

\begin{theorem} \cite{Fonteneau2010c} For any Lipschitz continuous policy $\pi$, let 
$V_{MFMC}^{\pi}$ be the MFMC estimate of the value of $\pi$ in $s_0$ based on $n$ MFMC trajectories of length $h$ drawn from database $D$. Under the Lipschitz continuity assumptions of Equations \ref{eq:lsp}, \ref{eq:lsr}, and \ref{eq:lspie}, the bias and variance of $V_{MFMC}^{\pi}$ are 

  \begin{equation} |MFMC^{\pi}_r(s_0)-E^{\pi}_r(s_0)|\leq
  C\alpha_{nh}(D) \label{eq:mfmc_bias} \end{equation}

  \begin{equation}
  Var_{n,D}^{\pi}(s_0)\leq \Big(\frac{\sigma^\pi_h(s_0)}{\sqrt{n}}+2C\alpha_{nh}(D)\Big)^2
  \label{eq:variance}
  \end{equation}

where $\sigma^\pi_h(s_0)$ is the variance of the total reward for $h$-step trajectories under $\pi$ when executed on the true MDP and $C$ is defined in terms of the Lipschitz constants as

\begin{equation}
C = L_R \sum_{i=0}^{h-1} \sum_{j=0}^{h-i-1} [L_f (1 + L_{\pi})]^j.
\label{eq:mfmc_c}
\end{equation}
\end{theorem}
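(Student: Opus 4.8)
The plan is to reproduce the coupling argument underlying the Fonteneau et al.\ bound: bound one rebuilt MFMC trajectory against a \emph{reference} trajectory of the true MDP that reuses the same randomness, and then aggregate over the $n$ trajectories. First I would fix a single MFMC trajectory and exploit the explicit-noise representation $s'=f_s(s,a,z)$, $r=f_r(s,a,z)$ to identify, for each stitched database tuple $(\tilde{s}_t,\tilde{a}_t,r_t,s_{t+1})$, the latent draw $z_t$ that produced it, so that $s_{t+1}=f_s(\tilde{s}_t,\tilde{a}_t,z_t)$ and $r_t=f_r(\tilde{s}_t,\tilde{a}_t,z_t)$. I would then define a coupled reference trajectory on the true MDP that starts at the same $s_0$ and, at each step, applies $f_s$ and $f_r$ to \emph{its own} state-action pair using the very same noise sequence $z_0,\ldots,z_{h-1}$. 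Because the database is generated by the true simulator and (crucially) each tuple is removed after use, these $z_t$ can be treated as genuine i.i.d.\ draws, so the reference return has expectation $V^\pi(s_0)$ and variance $\sigma^\pi_h(s_0)^2$.

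The key step is the error recursion. Let $e_t=\|s_t^{\mathrm{MFMC}}-s_t^{\mathrm{ref}}\|_S$. At step $t$ the query $(s_t^{\mathrm{MFMC}},\pi(s_t^{\mathrm{MFMC}}))$ is matched to a tuple within stitching distance $\alpha_{nh}(D)$; the uniform bound is the $nh$-dispersion precisely because over all $n$ trajectories at most $nh$ tuples are removed. Splitting the input mismatch $\|\tilde{s}_t-s_t^{\mathrm{ref}}\|_S+\|\tilde{a}_t-a_t^{\mathrm{ref}}\|_A$ by the triangle inequality into the stitching distance plus the inherited error, and applying the Lipschitz bounds (\ref{eq:lsp})--(\ref{eq:lspie}) to $f_s$ and to $a=\pi(s)$, gives $e_{t+1}\le L_f[\alpha_{nh}(D)+(1+L_\pi)e_t]$ with $e_0=0$. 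Unrolling this geometric recursion bounds $e_t$ by $\alpha_{nh}(D)$ times a partial sum of powers of $\rho:=L_f(1+L_\pi)$.

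Then I would bound the per-step reward discrepancy by $|r_t^{\mathrm{MFMC}}-r_t^{\mathrm{ref}}|\le L_R[\alpha_{nh}(D)+(1+L_\pi)e_t]$, sum over $t$, substitute the solved recursion, and use $(1+L_\pi)L_f=\rho$; after reindexing the resulting double sum $\sum_{t}\sum_{k\le t}\rho^k$ as $\sum_{i=0}^{h-1}\sum_{j=0}^{h-i-1}\rho^j$ it collapses to exactly the constant $C$ in (\ref{eq:mfmc_c}), yielding the per-trajectory bound $|R^{\mathrm{MFMC}}-R^{\mathrm{ref}}|\le C\alpha_{nh}(D)$. Taking expectations and averaging over the $n$ trajectories gives the bias bound (\ref{eq:mfmc_bias}), since $\mathbb{E}[R^{\mathrm{ref}}]=V^\pi(s_0)$. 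For the variance (\ref{eq:variance}) I would write each rebuilt return as $R^{\mathrm{ref}}$ plus a deviation of magnitude at most $C\alpha_{nh}(D)$; the $L^2$ triangle inequality then puts the standard deviation of a single rebuilt return within $2C\alpha_{nh}(D)$ of $\sqrt{\mathrm{Var}(R^{\mathrm{ref}})}=\sigma^\pi_h(s_0)$, averaging $n$ reference fluctuations contributes $\sigma^\pi_h(s_0)/\sqrt{n}$, and squaring the summed standard-deviation bound produces (\ref{eq:variance}).

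The main obstacle is the soundness of the coupling: justifying that the latent $z_t$ extracted from a tuple selected by a \emph{state-dependent} $\argmin$ may still be treated as an unbiased, independent draw, so that the reference trajectory genuinely has mean $V^\pi(s_0)$ and variance $\sigma^\pi_h(s_0)^2$. Making this rigorous is exactly what the without-replacement removal is for — it both legitimizes the uniform $\alpha_{nh}(D)$ bound and prevents a drawn noise from being reused downstream — and getting the factor of $2$ together with the squared form of the variance bound correct requires care in combining the stochastic fluctuation term with the systematic stitching error.
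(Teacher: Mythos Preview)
The paper does not actually prove this theorem: it is stated with the citation \cite{Fonteneau2010c} and no proof is given in the paper itself (the only proof material in Section~\ref{sec:mfmci} is the short sketch for the subsequent Corollary). Your proposal is therefore not comparable to a ``paper's own proof'' because none exists here; what you have written is a faithful reconstruction of the Fonteneau et al.\ coupling argument that the paper merely invokes. Your recursion $e_{t+1}\le L_f[\alpha_{nh}(D)+(1+L_\pi)e_t]$, the unrolling, the reward summation, and the reindexing that produces the constant $C$ in (\ref{eq:mfmc_c}) are all consistent with that external source, and your identification of the without-replacement rule as what licenses both the $nh$-dispersion bound and the i.i.d.\ treatment of the extracted noise $z_t$ is the right emphasis.
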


Now we derive analogous bias and variance bounds for $V_{MFMCi}^\pi(s_0)$. To this end, define two Lipschitz constants $L_{F_i}$ and $L_{R_i}$ such that the following conditions hold for the MDP:

\begin{equation} \label{eq:lspmfmci}
 \|f_x(x,a,w,z)-f_x(x',a,w,z)\|_X~\leq~ L_{f_i}(\|x-x'\|_X)
\end{equation}

\begin{equation} \label{eq:lsrmfmci}
 |f_r(x,a,w,z)-f_r(x',a,w,z)|~\leq~ L_{R_i}(\|x-x'\|_X).
\end{equation}

Where $\|\cdot\|_X$ is the chosen norm over the
$X$ space.

Let $\alpha_{i,k}(D)$ be the maximum distance from any Markov state
$x$ to its $k$-th nearest neighbor in database $D$ for the distance
metric $\Delta_i$. Further, let $x_0$ be the initial Markov state. Then we have

\begin{corollary}
For any Lipschitz continuous policy $\pi$, let 
$V_{MFMCi}^{\pi}$ be the MFMCi estimate of the value of $\pi$ in $x_0$ based on $n$ MFMCi trajectories of length $h$ drawn from database $D$. Under the Lipschitz continuity assumptions of Equations \ref{eq:lspmfmci} and \ref{eq:lsrmfmci}, the bias and variance of $V_{MFMCi}^{\pi}$ are 

\begin{equation} \label{eq:corbias}
|\algname^{\pi}(x_0)-E^{\pi}_r(x_0)|\leq C_i\alpha_{i,nh}(D)
\end{equation}

\begin{equation} \label{eq:corvariance}
Var_{i,n,D}^{\pi}(x_0)\leq \Big(\frac{\sigma_{h}^\pi(x_0)}{\sqrt{n}}+2C_i\alpha_{i,nh}(D)\Big)^2
\end{equation}

where $C_i$ is defined as

\begin{equation}
C_i = L_{R_i} \sum_{b=0}^{h-1} \sum_{j=0}^{h-b-1} [L_{f_i}]^j.
\label{eq:mfmci_c}
\end{equation}

\end{corollary}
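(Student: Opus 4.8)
The plan is to mirror the derivation behind the MFMC theorem of \citet{Fonteneau2010c} above, exploiting the two structural features that distinguish \algname{} from MFMC: matching and the dispersion $\alpha_{i,k}$ are computed in the lower-dimensional Markov-state space $X$ rather than in $S\times A$, and the debiased database of Algorithm~\ref{alg:sampling-procedure} stores, for every visited pair $(\tilde x,\tilde w)$, a successor and reward for \emph{every} action. The first feature is what replaces $\alpha_{nh}(D)$ by $\alpha_{i,nh}(D)$ and deletes the action-distance contribution from each per-step error term; the second is what lets \algname{} always take exactly the policy-chosen action at the stitched state, so no action-mismatch term ever arises --- this is precisely what removes the factor $(1+L_\pi)$ and turns $[L_f(1+L_\pi)]^j$ into $[L_{f_i}]^j$. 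As in the MFMC proof, I will first establish the bias bound (\ref{eq:corbias}) and then obtain the variance bound (\ref{eq:corvariance}) from it.

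The first step is to make the randomness explicit and build a coupling. Using the functional transition $x'=f_x(x,a,w,z)$ and reward $r=f_r(x,a,w,z)$ with $w\sim P(w)$, $z\sim P(z)$ from Section~\ref{sec:MFMC}, I record along a single \algname{} trajectory the exogenous values $\hat w_0,\dots,\hat w_{h-1}$ and stochasticity values $\hat z_0,\dots,\hat z_{h-1}$ carried by the stitched transition sets. Because Algorithm~\ref{alg:sampling-procedure} draws $w$ and $z$ independently of $x$ at every node, and \algname{} selects which set to stitch using $\Delta_i$ on $x$ alone, this recorded sequence is an i.i.d.\ stream from $P(w)\times P(z)$ --- precisely the independence of $\tilde w$ from $\tilde x$ that the debiased sampling restores, and the reason $w$ may be swapped for $\tilde w$ without perturbing the path distribution. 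I use this stream to drive a reference trajectory $x_0^{*}=x_0,x_1^{*},\dots$ that follows $\pi$ on the true MDP, so $\mathbb{E}\big[\sum_t f_r(x_t^{*},\pi(x_t^{*},\hat w_t),\hat w_t,\hat z_t)\big]=V^\pi(x_0)$.

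The second step bounds and unrolls the per-step error. Let $\delta_t=\|\hat x_t-x_t^{*}\|_X$, so $\delta_0=0$. At step $t$ the stitched set has source state $\tilde x_t$ with $\|\hat x_t-\tilde x_t\|_X\le\alpha_{i,nh}(D)$, since at most $nh$ transition sets are consumed across the $n$ trajectories of length $h$, so the nearest surviving set lies within the $nh$-th nearest neighbour. Since the debiased set supplies the successor for exactly the policy-chosen action, there is no action-mismatch term, and (\ref{eq:lspmfmci}) gives $\delta_{t+1}\le L_{f_i}\big(\alpha_{i,nh}(D)+\delta_t\big)$. Unrolling yields $\delta_t\le\alpha_{i,nh}(D)\sum_{j=1}^{t}L_{f_i}^{\,j}$, and (\ref{eq:lsrmfmci}) bounds the step-$t$ reward discrepancy by $L_{R_i}\big(\alpha_{i,nh}(D)+\delta_t\big)=L_{R_i}\,\alpha_{i,nh}(D)\sum_{j=0}^{t}L_{f_i}^{\,j}$. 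Summing over $t=0,\dots,h-1$ and re-indexing the double sum (each power $L_{f_i}^{\,j}$ occurs $h-j$ times) reproduces the constant $C_i$ of (\ref{eq:mfmci_c}), giving $\big|\sum_t\hat r_t-\sum_t r_t^{*}\big|\le C_i\,\alpha_{i,nh}(D)$; averaging over the $n$ trajectories preserves this, which is (\ref{eq:corbias}). For the variance, write $V_{MFMCi}^{\pi}=\frac1n\sum_i Z_i$ and let $Z_i^{*}$ be the $i$-th coupled on-policy return, so $|Z_i-Z_i^{*}|\le C_i\,\alpha_{i,nh}(D)$ pointwise; then $|V_{MFMCi}^{\pi}-\mathbb{E}V_{MFMCi}^{\pi}|\le|\frac1n\sum_i Z_i^{*}-\mathbb{E}\frac1n\sum_i Z_i^{*}|+2C_i\,\alpha_{i,nh}(D)$, and bounding the standard deviation of $\frac1n\sum_i Z_i^{*}$ by $\sigma_h^\pi(x_0)/\sqrt{n}$ (as in the Theorem) and squaring yields (\ref{eq:corvariance}).

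The main obstacle will be making the coupling of the first two steps rigorous: I must argue that the exogenous and stochasticity values harvested from the stitched transition sets genuinely behave as a fresh i.i.d.\ disturbance stream even though sets are removed without replacement and the stitching is data-dependent, and that the reference trajectory is \emph{exactly} on-policy --- so its expected return is $V^\pi(x_0)$ --- while the per-step recursion still closes with $L_{f_i}$ alone. As in \citet{Fonteneau2010c}, I expect to handle the without-replacement subtlety with a telescoping argument over hybrid trajectories that coincide with the \algname{} run for their first steps and with a true on-policy trajectory thereafter. The leverage throughout is the debiasing of Algorithm~\ref{alg:sampling-procedure}: it both decouples $\tilde w$ from $\tilde x$ and guarantees the policy's action is present at the stitched state, so that --- unlike in MFMC --- neither an action-distance term in the dispersion nor an $L_\pi$ factor in the recursion ever appears.
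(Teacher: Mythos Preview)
Your proposal is correct and follows essentially the same route as the paper's own proof. The paper gives only a sketch: it observes that (i) the debiased transition sets guarantee the policy-chosen action is always available so the action-mismatch term vanishes and $L_\pi$ drops out, (ii) matching on $x$ alone replaces $L_f$ and the $S$-norms by $L_{f_i}$ and the $X$-norms, and (iii) the exogenous $w$ can be absorbed into the stochasticity $z$ without bias --- then defers to the Fonteneau et al.\ argument. Your plan makes exactly these three moves explicit via a coupling to a reference on-policy trajectory driven by the harvested $(\hat w_t,\hat z_t)$ stream, derives the same per-step recursion, and correctly flags the telescoping/hybrid-trajectory device from \citet{Fonteneau2010c} as the way to make the without-replacement and data-dependence issues rigorous; this is precisely what the paper means by ``with these changes, the proof of Fonteneau, et al., holds.''
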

\begin{proof} (Sketch) The result follows by observing that because there is always
a matching action for each transition set, $a$ will equal $a'$ and
$\|a-a'\|_A$ will be zero, so we can eliminate $L_{\pi}$.  Similarly,
because we can factor out $w$, we only match on $x$, so we can
replace $L_f$ with $L_{f_i}$ and replace the norms with respect to $S$
by the norms with respect to $X$. Finally, as we argued above, by
using transition sets we do not introduce any added bias by adopting
$w$ instead of matching against it. Formally, we can view this as
converting $w$ from being an observable exogenous variable to being part
of the unobserved exogenous source of stochasticity $z$. With these
changes, the proof of Fonteneau, et al., holds.
\end{proof}

We believe that similar proof techniques can bound the bias and variance of estimates of the quantiles of $P(v^j(s_t))$ for properties $v^j(s_t)$ of the state at time step $t$.  We leave this to future work.

\section{Experimental Evaluation} \label{sec:vis_wildfire}

In our experiments we test whether we can generate accurate trajectory
visualizations for a wildfire, timber, vegetation, and weather
simulator. The aim of the wildfire management simulator is to help US
Forest Service land managers decide whether suppress a wildfire on
National Forest lands. Each 100-year trajectory takes up to 7 hours to
simulate.

\begin{figure}
    \centering
        \includegraphics[width=0.90\columnwidth]{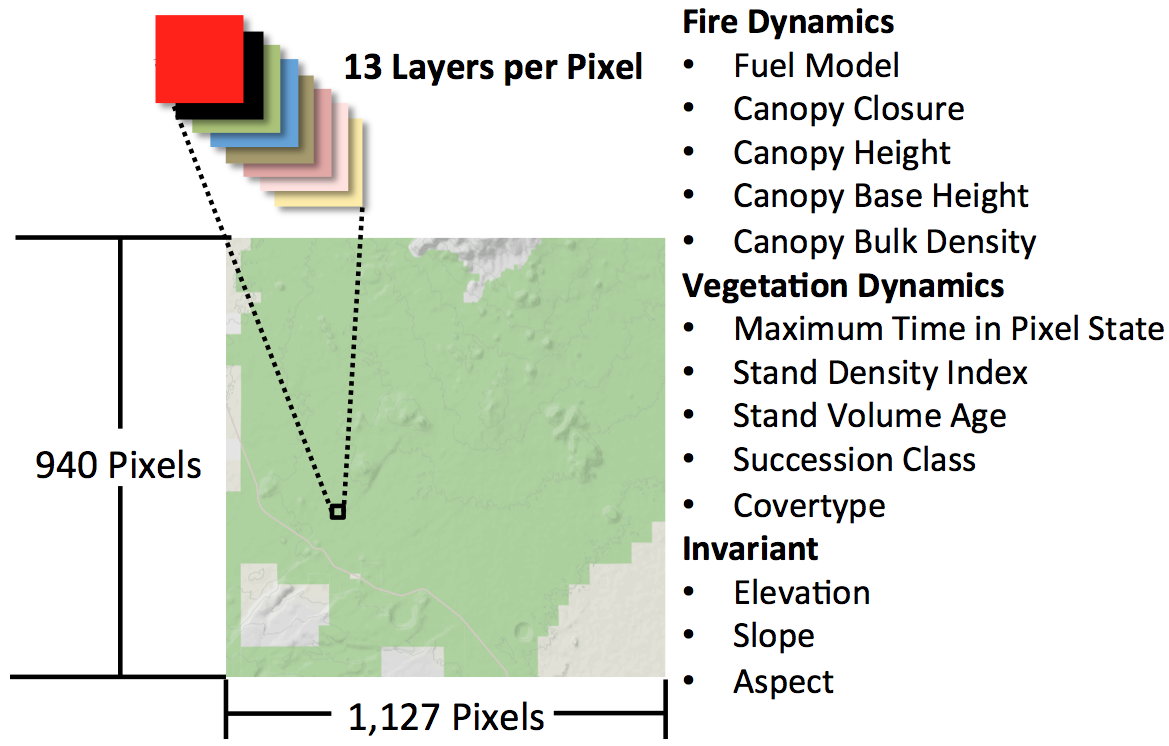}
        \caption{The landscape totals approximately one million pixels,
        each of which has 13 state variables that influence
        the spread of wildfire on the landscape.
        We use summary statistics of the dynamic state variables in MFMC's
        distance metric.
        (Map is copyright of OpenStreetMap contributors)}
        \label{fig:pixel-layers}
\end{figure}

Figure \ref{fig:pixel-layers} shows a snapshot of the landscape as
generated by the Houtman simulator \cite{Houtman2013}. The landscape is comprised of
approximately one million pixels, each with 13 state variables.
When a fire is ignited by lightning, the policy must choose between two actions:
\textit{Suppress} (fight the fire) and \textit{Let Burn} (do nothing). Hence, $|A|=2$.

The simulator spreads wildfires with the FARSITE fire model \cite{Finney1998}
according to the surrounding pixel variables ($X$) and the hourly weather.
Weather variables include \textit{
hourly wind speed,
hourly wind direction,
hourly cloud cover,
daily maximum/minimum temperature,
daily maximum/minimum humidity,
daily hour of maximum/minimum temperature,
daily precipitation, and
daily precipitation duration.}
These are generated by resampling from 25 years of observed weather
\cite{WesternRegionalClimateCenter2011}.
\algname\ can treat the weather variables and the ignition location
as exogenous variables because the decision to fight (or not fight)
a fire has no influence on weather
or ignition locations. Further, changes in the Markov state do not
influence the weather or the spatial probability of lightening strikes.

After computing the extent of the wildfire on the landscape,
the simulator applies a cached version of the Forest Vegetation
Simulator \cite{Dixon2002} to update the vegetation of the
individual pixels. Finally, a harvest scheduler selects
pixels to harvest for timber value.

We constructed three policy classes that map fires to fire suppression
decisions. We label these policies {\sc intensity}, {\sc fuel}, and
{\sc location}. The {\sc intensity} policy suppresses fires based on the
weather conditions at the time of the ignition and the number of days
remaining in the fire season.  The {\sc fuel} policy suppresses fires when
the landscape accumulates sufficient high-fuel pixels. The {\sc location}
policy suppresses fires starting on the top half of the landscape, and
allows fires on the bottom half of the landscape to burn (which mimics
the situation that arises when houses and other buildings occupy part
of the landscape). We selected these policy classes because they are
functions of different components of the Markov and exogenous state.
The {\sc intensity} policies are a function of the exogenous variables and
should be difficult for MFMC because the sequence of actions along a
trajectory will be driven primarily by the stochasticity of the
weather circumstances.  This contrasts with the {\sc fuel} policy, which
should follow the accumulation of vegetation between time steps in the
Markov state.  Finally, the {\sc location} policy should produce landscapes
that are very different from the other two policy classes as fuels
become spatially imbalanced in the Markov state.

The analysis of \citet{Fonteneau2013} assumes the database is
populated with state-action transitions covering the entire
state-action space. The dimensionality of the wildfire state space
makes it impossible to satisfy this assumption.  We focus sampling on states likely
to be entered by future policy queries by seeding the database with
one trajectory for each of 360 policies whose parameters are sampled
according to a grid over the {\sc intensity} policy space.  The {\sc intensity}
policy parameters include a measure of the weather conditions at the
time of ignition known as the Energy Release Component (ERC) and a measure
of the seasonal risk in the form of the calendar day. These measures
are drawn from $\big[0,100\big]$ and $\big[0,180\big]$, respectively.

The three policy classes are very different from each other.  One of our goals is to determine whether \algname\ can use state transitions generated from the {\sc intensity} policy to accurately simulate state transitions under the {\sc fuel} and {\sc location} policies. We evaluate \algname\ by generating 30 trajectories for each policy from the ground truth simulator.

For our distance metric $\Delta_i$, we use a weighted Euclidean
distance computed over the mean/variance standardized values of
the following landscape features:
\textit{
Canopy Closure,
Canopy Height,
Canopy Base Height,
Canopy Bulk Density,
Stand Density Index,
High Fuel Count,
and
Stand Volume Age}.
All of these variables are given a weight of 1.
An additional feature, the time step (\textit{Year}),
is added to the distance metric with a very large weight
to ensure that \algname\ will only stitch from one state
to another if the time steps match.
Introducing this non-stationarity ensures
we exactly capture landscape growth stages for all pixels
that do not experience fire.

Our choice of distance metric features
is motivated by the observation that
risk profile (the likely size of a wildfire)
and vegetation profile (the total tree cover)
are easy to capture in low dimensions. If we
instead attempt to capture the likely size of a
specific fire, we need a distance metric that
accounts for the exact spatial distribution of
fuels on the landscape. Our distance metric successfully
avoids directly modeling spatial complexity.

\begin{figure}
    \centering \includegraphics[width=0.75\columnwidth]{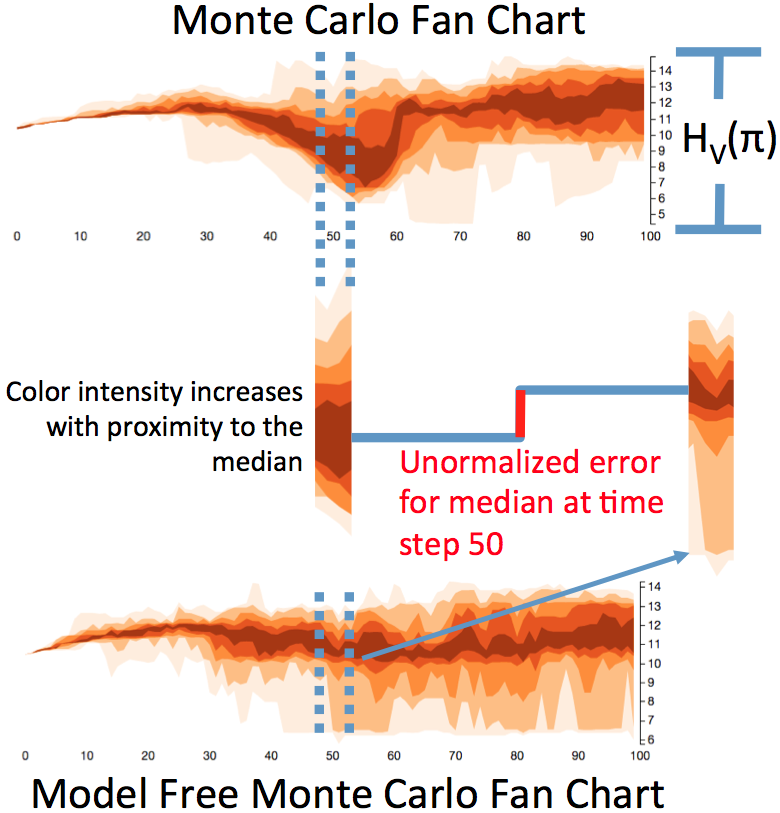} \caption{ Top: A fan chart generated by Monte Carlo simulations from the expensive simulator. Bottom:  A fan chart generated from the MFMC surrogate model. x axis is the time step and y axis is the value of the state variable at each time step.  Each change in color shows a quantile boundary for a set of trajectories generated under policy $\pi$. Middle: Error measure is the distance between the median of the Monte Carlo simulations (left) and the median of the MFMC/MFMCi surrogate simulations (right). The error is normalized across fan charts according to $H_{v}(\pi)$, which is the Monte Carlo fan chart height for policy $\pi$ and variable $v$. } \label{fig:fancharts}
\end{figure}

To visualize the trajectories, we employ the visualization
tool \visname{} \cite{McGregor2015a}.  The key visualization
in \visname{} is the fan chart, which depicts various quantiles of the
set of trajectories as a function of time (see
Figure \ref{fig:fancharts}).

To evaluate the quality of the fan charts generated using surrogate
trajectories, we define visual fidelity error in terms of the
difference in vertical position between the true median and its
position under the surrogate.  Specifically, we define
$\text{error}(v,t)$ as the offset between the correct location of the median and its MFMCi-modeled location
for state
variable $v$ in time step $t$. We normalize the error by the height of
the fan chart for the rendered policy ($H_{v}(\pi)$). The weighted
error is thus $\sum\limits_{v\in
S}\sum\limits_{t=0}^h\frac{\text{error}(v,t)}{H_v(\pi)}$.

This error is measured for 20 variables related to the counts of burned pixels, fire
suppression expenses, timber loss, timber harvest, and landscape
ecology.

\subsection{Experimental Results}

\captionsetup[subfigure]{skip=-8pt}

\begin{figure}
    \centering
    \begin{subfigure}[b]{1.0\columnwidth}
        \includegraphics[width=1.00\columnwidth]{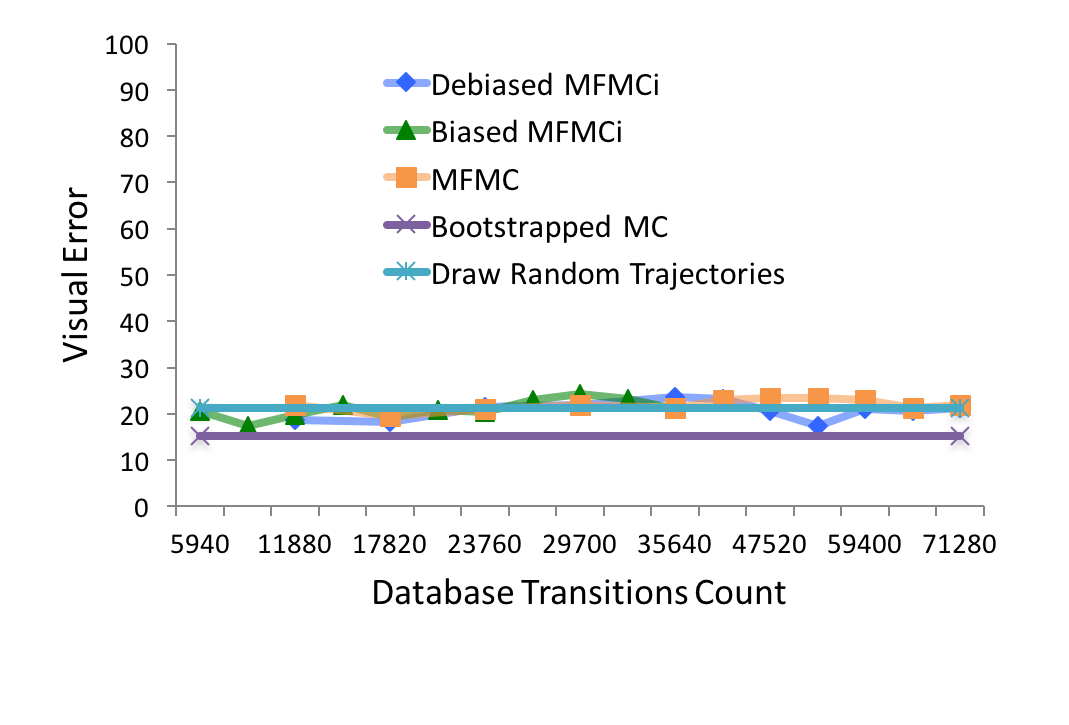}
        \caption{
        Visual fidelity errors for a weather \emph{intensity} policy class.
        Fires are suppressed based on a combination of the weather and how
        much time is left in the fire season.
        }
        \label{fig:intensity}
    \end{subfigure}
    \begin{subfigure}[b]{1.0\columnwidth}
        \includegraphics[width=1.00\columnwidth]{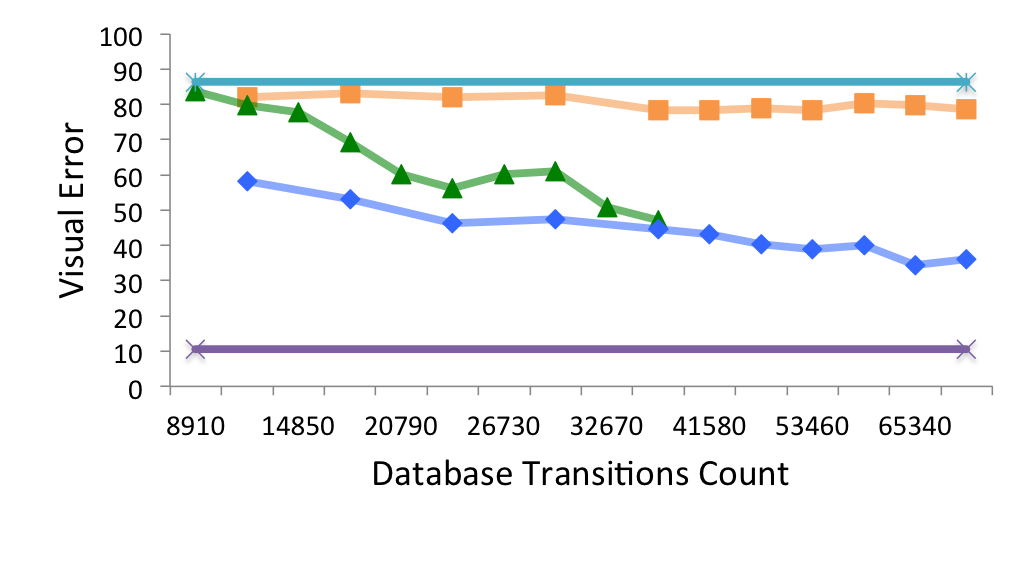}
        \caption{
        Visual fidelity errors for a ignition \emph{location} policy class.
        Fires are always suppressed if they start on the top half of the landscape, otherwise
        they are always allowed to burn.
        }
        \label{fig:location}
    \end{subfigure}
    \begin{subfigure}[b]{1.0\columnwidth}
        \includegraphics[width=1.00\columnwidth]{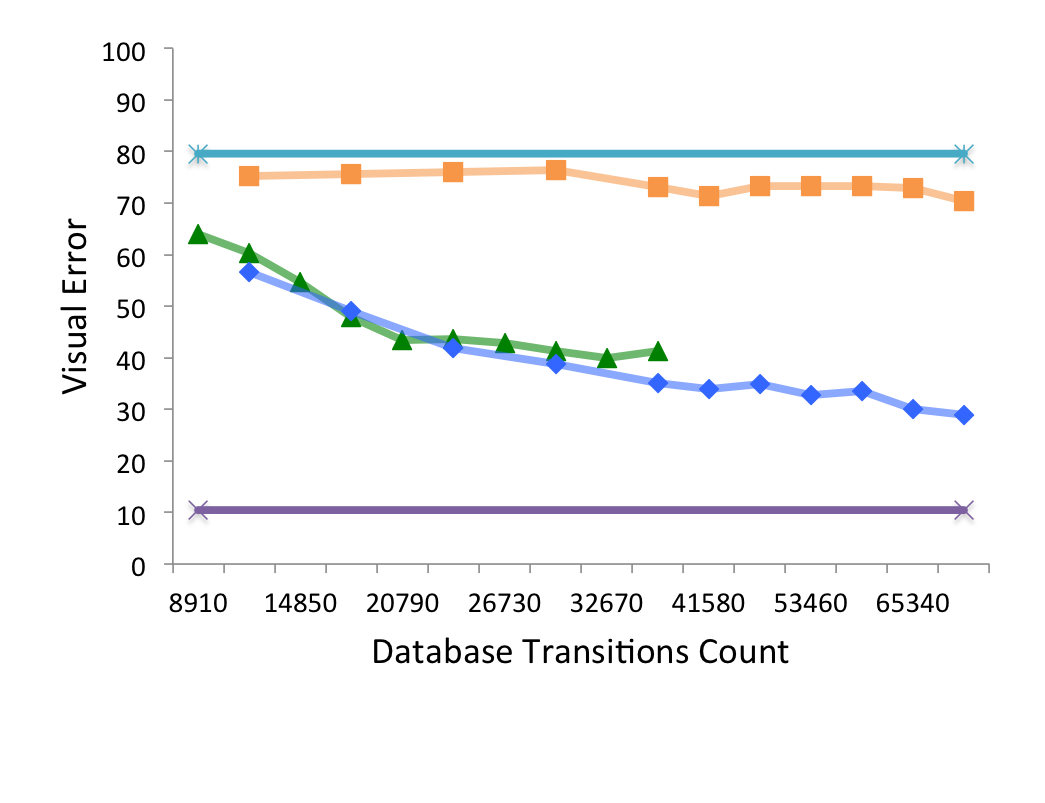}
        \caption{
        Visual fidelity errors for a \emph{fuel} accumulation policy class.
        Fires are always suppressed if the landscape is at least 30 percent in high fuels,
        otherwise the fire is allowed to burn.
        }
        \label{fig:biased}
    \end{subfigure}
    \caption{
      Policy classes for the wildfire domain under a variety of distance metrics and
      sampling procedures.
    }
    \label{fig:wildfire-quantitative}
\end{figure}

\captionsetup[subfigure]{skip=-1pt}

\begin{figure}
    \centering
        \includegraphics[width=.60\columnwidth]{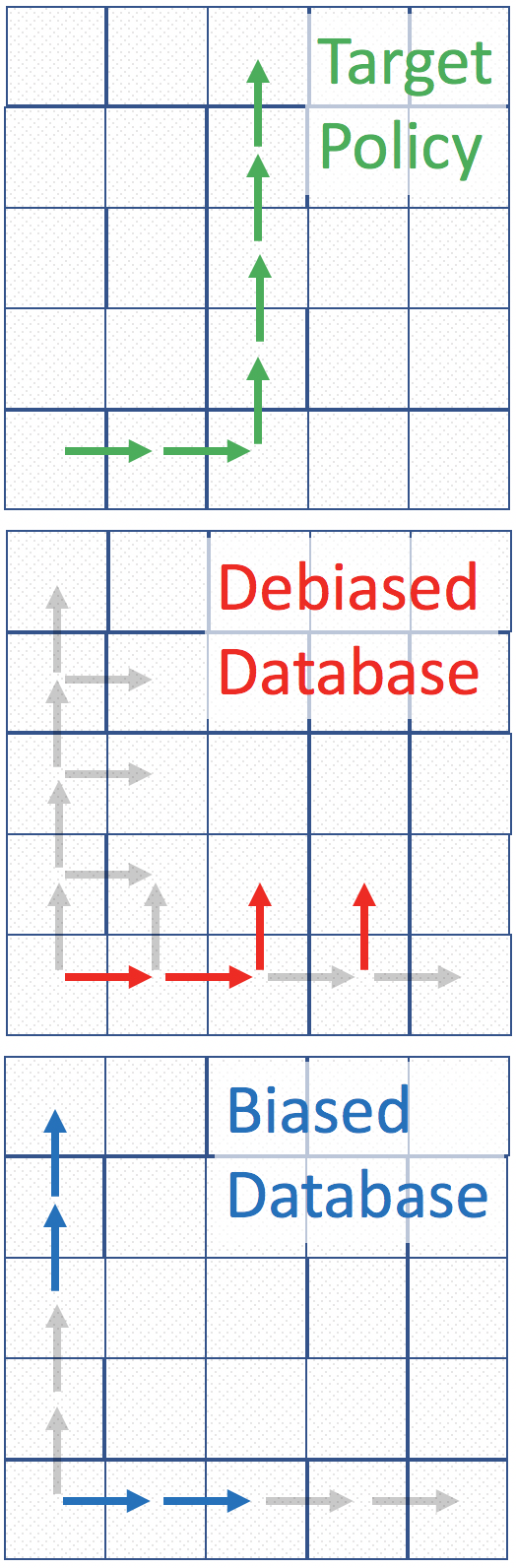}
        \caption{ Example of MFMC's autoregressive tendency
        for a grid world domain where the only available actions are
        ``up'' and ``right''. The green arrows show a trajectory
        that we would like to synthesize from two different MFMC databases
        where the distance metric is Euclidean with arbitrarily large weight
        given to the time step and action.
        The gray arrows show the grid world transitions in the
        two databases. In the debiased database the stitching
        operation will stay on the rightward trajectory despite
        there being transitions that more closely track the
        target trajectory. The biased database forces the stitching
        operation to hop to the policy more consistent with the target
        policy. In some instances it is better to bias
          the exogenous variables than repeatedly stitch to the same
          trajectories. }
        \label{fig:GridWorld}
\end{figure}

We evaluated the visual fidelity
under three settings:
(a) debiased MFMCi
(exogenous variables excluded from the distance metric $\Delta_i$;
debiasing tuples included in the database $D$),
(b) MFMC (exogenous variables included in $\Delta$), and
(c) biased MFMCi (exogenous variables excluded from $\Delta_i$
and the extra debiasing tuples removed from $D$).
We also compare against two baselines that explore the upper
and lower bounds of the visual error. First,
we show that the lower bound on visual error is not zero.
Although each policy has true quantile values at every time step,
estimating these quantiles with 30 trajectories is inherently
noisy. We estimate the
achievable visual fidelity by bootstrap resampling the 30 ground truth trajectories and
report the average visual fidelity error.
Second, we check whether the error introduced by stitching is worse than
visualizing a set of random database trajectories. Thus the bootstrap resample
forms a lower bound on the error, and comparison to the random trajectories
detects stitching failure.
Figure \ref{fig:wildfire-quantitative} plots ``learning curves'' that plot the visualization error as a function of the size of the database $D$. The ideal learning curve should show a rapid decrease in visual fidelity error as $|D|$ grows.

\section{Discussion}

For each policy class, we chose one target policy from that class and measured how well the behavior of that policy could be simulated by our MFMC variants. Recall that the database of transitions was generated using a range of {\sc intensity} policies.  When we apply the MFMC variants to generate trajectories for an {\sc intensity} policy, all methods (including random trajectory sampling) produce an accurate representation of the median for \visname{}.  When the database trajectories do not match the target policy, \algname\ outperforms MFMC. For some policies, the debiased database outperforms the biased databases, but the difference decreases with additional database samples. Next we explore these findings in more depth.

\textbf{{\sc Intensity} Policy.}
Figure \ref{fig:intensity} shows the results of simulating
an {\sc intensity} policy that suppresses all fires
that have an ERC between 75 and 95,
and ignite after day 120.
This policy suppresses approximately 60 percent of fires.
There are many trajectories in the database that agree with the target
policy on the majority of fires. Thus, to simulate the target policy it is
sufficient to find a policy with a high level of agreement and then
sample the entire trajectory.
This is exactly what MFMC, MFMCi, and Biased \algname\ do.
All of them stitch to a good matching trajectory and then
follow it, so they all give accurate visualizations
as indicated by the low error rate in Figure \ref{fig:intensity}.
Unsurprisingly, we can approximate {\sc intensity} policies from
a very small database $D$ built from other {\sc intensity} policies.

\textbf{{\sc Location} Policy.}
Figure \ref{fig:location} plots the visual fidelity error when simulating a {\sc location} policy from the database of {\sc intensity} policy trajectories. When $D$ is small, the error is very high. MFMC is unable to reduce this error as $D$ grows, because its distance metric does not find matching fire conditions for similar landscapes. In contrast, because the MFMCi methods are matching on the smaller Markov state variables, they are able to find good matching trajectories. The debiased version of MFMCi outperforms the biased version for the smaller database sizes. In the biased version the matching operation repeatedly stitches over long distances to find a database trajectory with a matching action. Debiased MFMCi avoids this mistake. This explains why debiased MFMCi rapidly decreases the error while biased MFMCi takes a bit longer but then catches up at roughly $|D|=$40,000. 

\textbf{{\sc Fuel} Policy.}
The {\sc fuel} policy shows a best case scenario for the biased database.  Within 7 time steps, fuel accumulation causes the policy action to switch from let-burn-all to suppress-all.  Since all of the trajectories in the database have a consistent probability of suppressing fires throughout all 100 years, the ideal algorithm will select a trajectory that allows all wildfires to burn for 7 years (to reduce fuel accumulation), then stitch to the most similar trajectory in year 8 that will suppress all future fires. The biased database will perform this ``policy switching'' by jumping between trajectories to find one that always performs an action consistent with the current policy.

Policy switching is preferable to the debiased database in some cases.  To illustrate this, consider the grid world example in Figure \ref{fig:GridWorld}. It shows that debiased samples can offer stitching opportunities that prevent policy switching and hurt the results.  

In summary, our experiments show that \algname{} is able to generalize across policy classes and that it requires only a small number of database trajectories to accurately reproduce the median of each state variable at each future time step. In general, it appears to be better to create a debiased database than a biased database having the same number of tuples. 

Our stakeholders in forestry plan to apply the \algname{} surrogate to their task of policy analysis.  We demonstrate the effectiveness of the \algname{} surrogate for the problem of wildfire policy optimization in \cite{McGregor2017}.


\section*{Acknowledgment}
  This material is based upon work supported by the National Science Foundation under Grant No. 1331932.

\bibliographystyle{icml2016}
\bibliography{references}

\begin{thebibliography}{14}
\providecommand{\natexlab}[1]{#1}
\providecommand{\url}[1]{\texttt{#1}}
\expandafter\ifx\csname urlstyle\endcsname\relax
  \providecommand{\doi}[1]{doi: #1}\else
  \providecommand{\doi}{doi: \begingroup \urlstyle{rm}\Url}\fi

\bibitem[Abbeel et~al.(2005)Abbeel, Ganapathi, and Ng]{Abbeel2005}
Abbeel, Pieter, Ganapathi, Varun, and Ng, Andrew~Y.
\newblock {Learning Vehicular Dynamics, with Application to Modeling
  Helicopters}.
\newblock \emph{Advances in Neural Information Processing Systems (NIPS)}, pp.\
   1--8, 2005.
\newblock ISSN 1049-5258.
\newblock URL \url{http://books.nips.cc/nips18.html}.

\bibitem[Arca et~al.(2013)Arca, Ghisu, Spataro, and Trunfio]{Arca2013}
Arca, Bachisio, Ghisu, Tiziano, Spataro, William, and Trunfio, Giuseppe~a.
\newblock {GPU-accelerated Optimization of Fuel Treatments for Mitigating
  Wildfire Hazard}.
\newblock \emph{Procedia Computer Science}, 18:\penalty0 966--975, 2013.
\newblock ISSN 18770509.
\newblock \doi{10.1016/j.procs.2013.05.262}.
\newblock URL
  \url{http://linkinghub.elsevier.com/retrieve/pii/S1877050913004055}.

\bibitem[Bellman(1957)]{Bellman1957}
Bellman, Richard.
\newblock \emph{{Dynamic Programming}}.
\newblock Princeton University Press, New Jersey, 1957.

\bibitem[Dixon(2002)]{Dixon2002}
Dixon, G.
\newblock \emph{{Essential FVS : A User's Guide to the Forest Vegetation
  Simulator}}.
\newblock Number November 2015. USDA Forest Service, Fort Collins, CO, 2002.
\newblock ISBN 9780903024976.
\newblock \doi{10.1007/978-3-319-23883-8}.

\bibitem[Finney(1998)]{Finney1998}
Finney, Mark~A.
\newblock \emph{{FARSITE: fire area simulator – model development and
  evaluation}}.
\newblock USDA Forest Service, Rocky Mountain Research Station, Missoula, MT,
  1998.

\bibitem[Fonteneau \& Prashanth(2014)Fonteneau and Prashanth]{Fonteneau2014}
Fonteneau, Raphael and Prashanth, {L A}.
\newblock {Simultaneous Perturbation Algorithms for Batch Off-Policy Search}.
\newblock In \emph{53rd IEEE Conference on Conference on Decision and Control},
  2014.

\bibitem[Fonteneau et~al.(2010)Fonteneau, Murphy, Wehenkel, and
  Ernst]{Fonteneau2010c}
Fonteneau, Raphael, Murphy, Susan~A, Wehenkel, Louis, and Ernst, Damien.
\newblock {Model-Free Monte Carlo-like Policy Evaluation}.
\newblock \emph{Proceedings of the Thirteenth International Conference on
  Artificial Intelligence and Statistics (AISTATS 2010)}, pp.\  217--224, 2010.

\bibitem[Fonteneau et~al.(2013)Fonteneau, Murphy, Wehenkel, and
  Ernst]{Fonteneau2013}
Fonteneau, Raphael, Murphy, Susan~a, Wehenkel, Louis, and Ernst, Damien.
\newblock {Batch Mode Reinforcement Learning based on the Synthesis of
  Artificial Trajectories.}
\newblock \emph{Annals of Operations Research}, 208\penalty0 (1):\penalty0
  383--416, Sep 2013.
\newblock ISSN 0254-5330.
\newblock \doi{10.1007/s10479-012-1248-5}.
\newblock URL
  \url{http://www.pubmedcentral.nih.gov/articlerender.fcgi?artid=3773886&tool=pmcentrez&rendertype=abstract}.

\bibitem[Houtman et~al.(2013)Houtman, Montgomery, Gagnon, Calkin, Dietterich,
  McGregor, and Crowley]{Houtman2013}
Houtman, Rachel~M., Montgomery, Claire~A., Gagnon, Aaron~R., Calkin, David~E.,
  Dietterich, Thomas~G., McGregor, Sean, and Crowley, Mark.
\newblock {Allowing a Wildfire to Burn: Estimating the Effect on Future Fire
  Suppression Costs}.
\newblock \emph{International Journal of Wildland Fire}, 22\penalty0
  (7):\penalty0 871--882, 2013.

\bibitem[McGregor et~al.(2015)McGregor, Buckingham, Dietterich, Houtman,
  Montgomery, and Metoyer]{McGregor2015a}
McGregor, Sean, Buckingham, Hailey, Dietterich, Thomas~G., Houtman, Rachel,
  Montgomery, Claire, and Metoyer, Ron.
\newblock {Facilitating Testing and Debugging of Markov Decision Processes with
  Interactive Visualization}.
\newblock In \emph{IEEE Symposium on Visual Languages and Human-Centric
  Computing}, Atlanta, 2015.

\bibitem[McGregor et~al.(2017)McGregor, Houtman, Montgomery, Metoyer, and
  Dietterich]{McGregor2017}
McGregor, Sean, Houtman, Rachel, Montgomery, Claire, Metoyer, Ronald, and
  Dietterich, Thomas~G.
\newblock {Fast Optimization of Wildfire Suppression Policies with SMAC}.
\newblock \emph{ArXiv}, 2017.
\newblock URL \url{https://arxiv.org/abs/1703.09391}.

\bibitem[Puterman(1994)]{Puterman1994}
Puterman, Martin.
\newblock \emph{{Markov Decision Processes: Discrete Stochastic Dynamic
  Programming}}.
\newblock Wiley-Interscience, 1st edition, 1994.

\bibitem[Sutton(1990)]{s-ialprbadp-90}
Sutton, Richard~S.
\newblock Integrated architectures for learning, planning, and reacting based
  on approximating dynamic programming.
\newblock In \emph{Proceedings of the Seventh International Conference on
  Machine Learning}, pp.\  216--225, San Francisco, CA, 1990. Morgan Kaufmann.

\bibitem[{Western Regional Climate
  Center}(2011)]{WesternRegionalClimateCenter2011}
{Western Regional Climate Center}.
\newblock \emph{{Remote Automated Weather Stations (RAWS)}}.
\newblock Western Regional Climate Center, Reno, NV, 2011.

\end{thebibliography}

\end{document}